\definecolor{codegreen}{rgb}{0,0.6,0}
\definecolor{codegray}{rgb}{0.5,0.5,0.5}
\definecolor{codepurple}{rgb}{0.58,0,0.82}
\definecolor{backcolour}{rgb}{0.95,0.95,0.92}
\lstdefinestyle{python}{
    backgroundcolor=\color{backcolour},   
    commentstyle=\color{codegreen},
    keywordstyle=\color{magenta},
    numberstyle=\tiny\color{codegray},
    stringstyle=\color{codepurple},
    basicstyle=\ttfamily\footnotesize,
    breakatwhitespace=false,         
    breaklines=true,                 
    captionpos=b,                    
    keepspaces=true,                 
    numbers=left,                    
    numbersep=5pt,                  
    showspaces=false,                
    showstringspaces=false,
    showtabs=false,                  
    tabsize=2
}
\newtheorem{lemma}{Lemma}
\newtheorem{theorem}{Theorem}
\newtheorem{definition}{Definition}
\newtheorem*{lemma*}{Lemma}
\newtheorem*{theorem*}{Theorem}
\newtheorem*{assumption*}{Assumption}
\newtheorem*{corollary*}{Corollary}
\newtheorem*{remark*}{Remark}
\newtheorem*{definition*}{Definition}
\newcommand{\ip}[2]{\left\langle #1, #2\right\rangle}
\def\eqref#1{equation~\ref{#1}}
\def\1{\bm{1}}
\def\vone{{\bm{1}}}
\def\vr{{\bm{r}}}
\def\vs{{\bm{s}}}
\def\vw{{\bm{w}}}
\def\vx{{\bm{x}}}
\def\vy{{\bm{y}}}
\def\vz{{\bm{z}}}
\def\mS{{\bm{S}}}
\def\mU{{\bm{U}}}
\def\mW{{\bm{W}}}
\def\mX{{\bm{X}}}
\def\mY{{\bm{Y}}}
\def\mZ{{\bm{Z}}}
\DeclareMathAlphabet{\mathsfit}{\encodingdefault}{\sfdefault}{m}{sl}
\SetMathAlphabet{\mathsfit}{bold}{\encodingdefault}{\sfdefault}{bx}{n}
\def\gO{{\mathcal{O}}}
\def\gS{{\mathcal{S}}}
\newcommand{\R}{\mathbb{R}}
\DeclareMathOperator*{\argmax}{arg\,max}
\newcommand{\absl}{\left \vert}
\newcommand{\absr}{\right \vert}
\title{On the Adversarial Robustness of Mixture of Experts}
\author{%
  Joan Puigcerver\\
  Google Research\\
  \And
  Rodolphe Jenatton\\
  Google Research\\
  \And
  Carlos Riquelme\\
  Google Research\\
  \AND
  Pranjal Awasthi\\
  Google Research\\
  \And
  Srinadh Bhojanapalli\\
  Google Research\\
}
\begin{document}

\maketitle

\begin{abstract}

\looseness=-1
Adversarial robustness is a key desirable property of neural networks. It has been empirically shown to be affected by their sizes, with larger networks being typically more robust. Recently, \citet{bubeck2021universal} proved a lower bound on the Lipschitz constant of functions that fit the training data in terms of their number of parameters. This raises an interesting open question, do---and can---functions with more parameters, but not necessarily more computational cost, have better robustness? We study this question for sparse Mixture of Expert models (MoEs), that make it possible to scale up the model size for a roughly constant computational cost. We theoretically show that under certain conditions on the routing and the structure of the data, MoEs can have significantly smaller Lipschitz constants than their dense counterparts. The robustness of MoEs can suffer when the highest weighted experts for an input implement sufficiently different functions. We next empirically evaluate the robustness of MoEs on ImageNet using adversarial attacks and show they are indeed more robust than dense models with the same computational cost. We make key observations showing the robustness of MoEs to the choice of experts, highlighting the redundancy of experts in models trained in practice.

\end{abstract}

\section{Introduction}\label{sec:intro}

Adversarial robustness refers to prediction robustness of a given machine learning model to adversarial, but bounded, changes to the input. Neural networks trained with standard classification objectives have been shown to have poor adversarial robustness~\citep{szegedy2013intriguing}, a property attributed to their overparametrization. Conversely, in practice, larger models, that have more parameters and higher computation cost, have shown better robustness~\citep{madry2018towards, xie2019intriguing, alayrac2019labels, gowal2020uncovering}.

In a recent work, \citet{bubeck2021universal} studied this phenomenon from a theoretical perspective, by analyzing the  relationship between model size and its Lipschitz constant - which measures the sensitivity of a function to changes in the input. In particular the authors proved that any function with $P$ parameters, that memorizes $N$ input data points in $D$ dimensions, has a Lipschitz constant of at least
$\gO(\sqrt{\frac{N\cdot D}{P}})$. 
This shows that, on a given dataset, larger models can have better robustness (smaller Lipschitz constant). Note that as this is only a lower bound, it does not guarantee that larger models will indeed have a smaller Lipschitz constant. Interestingly, the result is agnostic to other properties of the function, such as its computation cost, or specific architecture.

Given the above result it is natural to wonder: can one increase the model size, without increasing the computation cost, and enjoy better robustness? An increasingly popular class of such models are Mixture of Experts (MoE)~\citep{shazeer2017outrageously}. MoE models have multiple expert sub-models, and a routing function that selects for each input a small subset of the experts and routes the input only to them. Neural networks with MoE layers, such as Switch Transformer~\citep{switch} in NLP, and V-MoE~\citep{vmoe2021} in Computer Vision, have been shown to achieve superior performance in comparison to their dense counterparts, by allowing one to scale the model size without increasing the computation cost. In this paper we study the following question: are MoE models more adversarially robust than dense models?

In general, sparse MoE models are not continuous, and hence not even smooth. Small changes to the input can result in the router selecting a different expert for a given example. Unfortunately, in some cases these experts may be very different, resulting in large changes to the output. Another factor that affects the robustness of MoE models is the geometry of input data and its routing --how the data is divided between experts. This decides what data an expert is trained on and hence their robustness. Finally, it has been observed that --unless certain auxiliary losses are also applied to encourage balancedness-- the number of used experts tends to collapse to very few, and the remaining ones are just ignored~\citep{vmoe2021, switch}. Given these stability issues, it is a priori \emph{not} clear if MoEs, despite having more parameters, are more robust than dense models.

To theoretically study the effect of both of these factors, (1) router stability and (2) data routing, on robustness we consider the \emph{smooth} MoE models, where each expert output is weighted by their routing probability.
In particular, MoE models where all experts are simultaneously applied to every input.
We analyze models with fixed routing proving that MoEs with \emph{linear experts} can achieve better robustness if the data is well separated and routed accordingly. In the extreme case, MoEs with $E$ experts can have a smaller Lipschitz constant by a factor of ${1}/{\sqrt{E}}$ compared to equivalent dense models. We also characterize the effect of the difference between experts in terms of robustness, showing that MoEs can have a high Lipschitz constant when two experts are very different for inputs that weigh the experts similarly and highly.
We show that both these factors, data routing and difference between relevant experts, characterize the Lipschitz constant of MoE models.

We next evaluate robustness experimentally by using adversarial attacks~\citep{madry2018towards}. We experimentally show that on the ImageNet dataset~\citep{deng2009imagenet}, MoEs are more robust than dense models that have the same computation cost. Interestingly, we observe that adversarial attacks result in significant routing changes of the inputs, but do not result in lower robustness than dense models. This suggests that in practice, MoEs are robust to the choice of the experts to an extent. We next perform standard adversarial training of both the dense and the MoE models and again observe that MoEs are more robust against adversarial attacks. %

The main contributions of the current work are as follows:
\begin{enumerate}
    \item We propose a simple theoretical framework to understand the robustness of Mixture of Experts (MoE) models. We provide general and non-trivial sufficient conditions under which MoEs are provably more robust
    than their dense counterparts for the linear experts setting.
    \item We perform extensive experiments to demonstrate that in practice MoEs indeed enjoy better robustness than dense models to norm bounded adversarial attacks. We also uncover intriguing properties of adversarial attacks for MoE models and show that even for robust MoE models, the attacks very often change the routing of data points, thereby pointing to a high degree of redundancy in such models.
\end{enumerate}

\section{Preliminaries}
\subsection{Mixture of Experts} Mixture of Experts combine the outputs of (sub-)models, i.e., the \emph{experts}, via a weighted sum of their outputs~\cite{jacobs1991adaptive,jordan1994hierarchical,Yuksel2012,eigen2013learning}. \emph{Sparse} Mixture of Experts condition the weights of the sum in the inputs, and activate only $K$ (out of $E$) experts, where $K$ is typically a very small integer compared to $E$ (such as $K = 1$ or $K = 2$)~\cite{shazeer2017outrageously,switch}. This form of conditional computation models allows one to easily increase the number of parameters in the model (roughly) independently of its compute cost~\cite{patterson2021carbon}. This approach has been recently applied to significantly increase the model size and quality of models used in Natural Language Processing~\cite{shazeer2017outrageously,gshard2020,lewis2021base,switch,du2022glam} and Computer Vision applications~\cite{vmoe2021,xue2021go}.

More concretely, we define the sparse and smooth (or dense) versions of mixtures of experts below. While previous empirical works have sometimes selected more than one expert per input ($K > 1$), for simplicity we constrain ourselves to the case where only one expert is selected for each input ($K = 1$).
Recently, practical models also tend to this setup to match the FLOPs of dense models~\cite{switch}.

\textbf{Sparse MoEs.}
In these models only one expert is selected for a given example $\vx$ based on its routing probabilities $p_i(\vx)$. These models are not continuous functions of the input. A small change in the input can lead to large changes in the output, due to changes in selected experts. A sparse MoE layer is defined as:
 \begin{align}\label{eq:sparse_moe}
    f(\vx) = \sum_{i=1}^E \vone_{\{ p_i(\vx) \geq p_j(\vx), \forall j \neq i \}} \ f_i(\vx).
\end{align}

Here $f_i(\vx)$ are the individual expert functions. $\vone$ is the indicator function that takes the value of 1 if the condition is satisfied and 0 else. Note that to do tie breaking in the above definition, in case multiple experts have the same maximum probability, we simply sample uniformly one of the experts with the maximum probability. 
In practice, MoE models can have multiple sparse layers in combination with dense layers. Further, random noise is sometimes added to the $p_i$'s~\cite{shazeer2017outrageously,vmoe2021}.

We refer to MoEs with only 1 expert as \textbf{dense} models. In dense models, which are the standard in neural networks literature, same function is applied to all input examples. In sparse MoEs introduced above, different part of network / expert is applied to each input example depending on which expert selected. This flexibility allows us to train larger sparse MoEs, that have the same computational cost as the dense models, as only a part of MoE is activated/selected for each input, with better accuracy than dense models~\citep{switch,vmoe2021}.

\textbf{Smooth MoEs.}
To theoretically analyze robustness, we consider a smooth mixture of expert models, as they are continuous functions.
In this case, the output of each expert $f_i$, is weighted according to its routing probabilities:
 \begin{align}\label{eq:moe}
    f(\vx) = \sum_{i=1}^E p_i(\vx) f_i(\vx).
\end{align}
The routing probabilities $p_i(\vx)$ are usually computed by a routing layer, e.g., $p_i(\vx) = \sigma(\mS \vx)_i$, where $\sigma$ is the softmax and $\mS \in \R^{E \times D}$ is a trainable variable. Such linear routing layers are the common choice for MoE models in practice~\citep{switch}.

Though these models have recently been shown to achieve state of art performance for tasks in NLP~\citep{switch} and vision~\citep{vmoe2021}, there have not been many works analyzing their robustness. Recently, \citet{allingham2021} empirically studied robustness of MoEs to natural perturbations in data (e.g., evaluation across different corrupted versions of CIFAR10~\cite{krizhevsky2009learning} and ImageNet~\citep{deng2009imagenet}), and showed MoEs are more robust for these natural perturbations than the corresponding dense models. To the best of our knowledge ours is the first work theoretically analyzing robustness of MoEs, and empirically exploring it in the context of adversarial perturbations.

\textbf{Load balancing loss}
To prevent MoEs from collapsing and always selecting the same expert for all inputs, it is customary during training to add a \emph{load balancing loss}, that encourages equal fraction of inputs being routed to different experts. We define and present these losses in detail in \cref{sec:auxiliary_losses}.

\subsection{Adversarial robustness}
Adversarial robustness models the susceptibility of a function to adversarial perturbations to its input~\citep{szegedy2013intriguing, madry2018towards}. More concretely, given a function $f$, loss function $\ell$, and an input $\vx \in \R^D$, we can write the {\em adversarial loss} incurred at $\vx$ with label $\vy$ as follows.
\begin{align}
\label{eq:adv-loss}
    \max_{\vz: \|\vz\| \leq \epsilon} \ell(f(\vx+\vz), \vy).
\end{align}
Here $\epsilon$ is the attack radius per input. For the norm constraint ($\|\vz\|$), popular choices in practice, are $\ell_{\infty}$ and $\ell_2$ norms~\citep{madry2018towards, carlini2019evaluating}. Adversarial accuracy is the accuracy of the model on data with perturbations that satisfy the above equation, i.e., when the loss function $\ell$ is the $0/1$ classification loss.

Since optimizing \cref{eq:adv-loss} for neural networks is computationally hard, popular methods to find adversarial examples use local search approaches such as gradient ascent.
The Fast Gradient Signed Method (FGSM) is perhaps the simplest one, with only one gradient update step~\citep{fgsm}. For $\ell_\infty$ bounded perturbations the FGSM update is the following:
\begin{align}
    \vx+\vz = \vx+\epsilon \textrm{ sgn }(\nabla_{\vx} \ell(f(\vx), \vy)).
\end{align}
\textrm{sgn}()\ is the sign function. The Projected Gradient Descent (PGD) method is a stronger attack that does multiple ($\tau$) gradient ascent steps to find the perturbation~\citep{madry2018towards}. The update rule for $\ell_\infty$ bounded perturbations is as follows, starting with $\vx^0 = \vx$.
\begin{align}
    \vx^{t+1} = \Pi_{\vx+\mathcal{C}} \left[\vx^t + \alpha \textrm{ sgn }(\nabla_{\vx} \ell(f(\vx), \vy)) \right], \forall t \in [0, \tau-1].
\end{align}
$\alpha$ is chosen to be $\frac{\epsilon}{\tau}$. Here $\mathcal{C}$ is the constraint set $\{ \vz: \|\vz\|_{\infty} \leq \epsilon \}$ and $\Pi$ is the projection operator. 

Existing works have established that neural networks are sensitive to adversarial attacks resulting in a significant drop in their accuracy~\citep{szegedy2013intriguing}. Interestingly, increasing the size of neural networks, thereby increasing their capacity, leads to an improvement in adversarial accuracy, showing that larger neural networks are more robust~\citep{madry2018towards, xie2019intriguing, alayrac2019labels, gowal2020uncovering}. In a recent work, \citet{bubeck2021universal} studied this phenomenon theoretically. They proved a lower bound on the Lipschitz constant of any function that fits the training data that scales inversely
$\frac{1}{\sqrt{P}}$
with the number of function parameters $P$. However it is not clear if functions that have more parameters, but not necessarily more computation cost (e.g. MoEs), can achieve better robustness. In this paper we analyze adversarial robustness of MoEs, both theoretically and experimentally.

\textbf{Notation.}
We use small bold letters, e.g., $\vx$, to denote vectors and capital bold letters to denote matrices, e.g., $\mW$. Scalars are denoted with plain letters. $[N]$ denotes the integer set from $1$ to $N$. $\|.\|$ denotes the $\ell_2$ norm unless specified otherwise.

\section{Robustness analysis}
\label{sec:robustness_analysis}
In this section, we present our main results regarding the robustness of the mixture of expert models. We first present a bound on the Lipschitz constant of MoEs for general expert functions $f_i$. While this is an upper bound for general functions it still highlights the two key components that affect the robustness of MoE models.

\subsection{Router stability}\label{subsec:learned_routing}
In this section, we compute the Lipschitz constant of smooth MoEs with learnable routing. Let,
\begin{align}\label{eq:learned_routing}
    f(\vx) = \sum_{i=1}^E p_i(\vx) f_i(\vx)
    \ \ \text{where}\ \
    p_i(\vx) = \frac{\exp(\ip{\vs_i}{\vx})}{\sum_{j=1}^E \exp(\ip{\vs_j}{\vx})}
\end{align}
and $\{\vs_i \}_{i \in [E]}$, with each $\vs_i $ in $\R^D$, %
are learnable variables that decide the routing of an example to different experts. %
We then prove the following upper bound on the Lipschitz constant of the MoE models with learnable routing.
\begin{lemma}\label{lem:learned_lipschitz}
Let
$\{f_i\}_{i\in[E]}$
be smooth functions with Lipschitz constants $\{L_{f_i}\}_{i\in[E]}$ and let $L_f$ be the Lipschitz constant of $f$. Let $\bar{\vs}(\vx) = \sum_j p_j(\vx) \vs_j$. Then $f(\vx)$ in \cref{eq:learned_routing} satisfies,
 $$\|\nabla_{\vx} f(\vx)\| \leq \sum_{i=1}^E p_i(\vx) L_{f_i} + \bigg \|\sum_{i=1}^E p_i(\vx) f_i(\vx) (\vs_i - \bar{\vs}(\vx))\bigg\|,$$ and hence 
\begin{align}\label{eq:learned_lipschitz}L_f \leq \max_{i\in[E]}\{ L_{f_i} \} +  \sup_x \bigg\|\sum_{i=1}^E p_i(\vx) f_i(\vx) (\vs_i - \bar{\vs}(\vx))\bigg\|.\end{align} %
\end{lemma}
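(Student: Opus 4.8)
The plan is to differentiate $f$ directly, exploiting the fact that the softmax routing has a particularly clean Jacobian. First I would apply the product rule to \cref{eq:learned_routing}, writing
$$\nabla_{\vx} f(\vx) = \sum_{i=1}^E p_i(\vx)\,\nabla_{\vx} f_i(\vx) + \sum_{i=1}^E f_i(\vx)\,\nabla_{\vx} p_i(\vx).$$
The crux is then to evaluate $\nabla_{\vx} p_i(\vx)$ for the softmax. A direct computation via the quotient rule gives the standard identity $\nabla_{\vx} p_i(\vx) = p_i(\vx)\big(\vs_i - \sum_j p_j(\vx)\vs_j\big) = p_i(\vx)(\vs_i - \bar{\vs}(\vx))$, where the subtracted term $\bar{\vs}(\vx)$ arises precisely from differentiating the normalizing denominator. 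Substituting this identity into the second sum turns it into $\sum_{i} p_i(\vx) f_i(\vx)(\vs_i - \bar{\vs}(\vx))$, which is exactly the correction term appearing in the statement.

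Next I would bound each piece by the triangle inequality. For the first sum, since each expert is $L_{f_i}$-Lipschitz we have $\|\nabla_{\vx} f_i(\vx)\| \le L_{f_i}$, so $\big\|\sum_i p_i(\vx)\nabla_{\vx} f_i(\vx)\big\| \le \sum_i p_i(\vx) L_{f_i}$ by the triangle inequality together with $p_i(\vx) \ge 0$. Leaving the second sum intact and applying the triangle inequality across the two sums yields the first displayed inequality of the lemma.

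Finally, to pass from the pointwise gradient bound to the Lipschitz constant $L_f$, I would take the supremum over $\vx$. Using $L_f = \sup_{\vx}\|\nabla_{\vx} f(\vx)\|$ and subadditivity of the supremum, the first term contributes $\sup_{\vx}\sum_i p_i(\vx) L_{f_i}$. Because the routing probabilities form a convex combination ($p_i(\vx)\ge 0$ and $\sum_i p_i(\vx)=1$), this weighted average never exceeds $\max_{i\in[E]}\{L_{f_i}\}$, which gives the first term of \cref{eq:learned_lipschitz}; the supremum of the remaining sum gives the second term.

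I do not expect a serious obstacle: the argument is essentially the product rule plus the softmax-Jacobian identity. The only points requiring care are the softmax derivative computation—getting the $\bar{\vs}(\vx)$ term with the correct sign and weighting—and, if the experts are vector-valued, interpreting $f_i(\vx)(\vs_i - \bar{\vs}(\vx))$ as the outer product $f_i(\vx)(\vs_i - \bar{\vs}(\vx))^\top$, replacing gradients by Jacobians and the vector norm by the operator norm throughout; the structure of the bound is unchanged.
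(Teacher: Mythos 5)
Your proposal is correct and follows essentially the same route as the paper's proof: product rule on $f$, the softmax Jacobian identity $\nabla_{\vx} p_i(\vx) = p_i(\vx)(\vs_i - \bar{\vs}(\vx))$, a triangle-inequality bound on the two resulting sums, and finally the observation that the convex combination $\sum_i p_i(\vx) L_{f_i}$ is at most $\max_i L_{f_i}$ (which the paper attributes to H\"older's inequality). Your remark about the vector-valued case requiring the outer product $f_i(\vx)(\vs_i - \bar{\vs}(\vx))^\top$ is a reasonable point of care that the paper glosses over, but it does not change the argument.
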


The above Lemma bounds the Lipschitz constant of MoE models with two terms that depend on (1) data routing, and (2) router stability. The first term above is from the individual Lipschitz constant of the experts. This depends on how the data is partitioned or routed to different experts, which we will discuss in more detail in the next section. The second term arises from the router used to compute probabilities for different experts. One may wonder if we can make this term arbitrarily large by increasing the norm of $\vx$. This is not the case as increasing the norm of $\vx$ usually results in the collapse of the routing probabilities to a single expert. This makes $p_i(\vx) (\vs_i -\bar{\vs}(\vx))$ zero for all experts $i \in [E]$. %

To study the above more concretely, we consider the case of $E=2$ experts. In this setting the second term above reduces to $\|p_1(\vx) f_1(\vx) (\vs_1 - \bar{\vs}(\vx)) + (1-p_1(\vx)) f_2(\vx) (\vs_2 - \bar{\vs}(\vx))\|$. Now, in the extreme case, where $p_1(\vx)$ is either 0 or 1, the above term collapses to 0 as $p_i(\vx) f_i(\vx) (\vs_i - \bar{\vs}(\vx))$ is 0 in that case. This is also expected as for examples that are routed to an expert with high probability, the main dominating factor is the individual expert Lipschitz constant, as small perturbations do not cause much change to the probabilities.

A more interesting setting is when $p_1(\vx)$ is $\frac{1}{2}$. In this setting, small perturbations to the input can cause the model to drastically change the weight between the experts, and the above term reduces to $\|\frac{1}{4} \cdot (f_1(\vx) - f_2(\vx)) \cdot (\vs_1 - \vs_2)\|$. Hence MoE models can suffer from large Lipschitz constants if the two experts are very different for points on the boundary, i.e., if $|f_1(\vx) - f_2(\vx)| \gg 0$. Alternately if $f_1(\vx) \approx f_2(\vx)$ for points on the boundary (i.e. $p_1(\vx) \approx \frac{1}{2}$) then this term is small. 

We will later see in the experiments that MoEs trained in practice often have good router stability and changes to routing do not result in much degradation of the final accuracy.

\subsection{Data routing}\label{subsec:data_routing}
In this section we will study the effect of data routing on the Lipschitz constant of the individual experts -- the first term in RHS of \cref{eq:learned_lipschitz} (\cref{lem:learned_lipschitz}). In particular we will try to address how large can $ \max_{i\in[E]}\{ L_{f_i} \}$ be in comparison to the Lipschitz constant of a dense model trained on the same data. Towards this end, we will theoretically analyze the robustness of MoE models with a pre-determined fixed routing and linear experts, in the light of the structure of the data.

\subsubsection{Setup}
We consider the setting of fixed routing, where we assume the routing of individual examples to experts is pre-determined and fixed. Moreover, we consider \emph{linear models} as experts.

More specifically, we consider $N$ input points of dimension $D$ stacked in the matrix $\mX \in \R^{N \times D}$, together with the corresponding vector of targets $\vy \in \R^N$.
In the dense case (i.e., no experts), we would like to find a linear model $\vw^* \in \R^D$ that minimizes
\begin{align}\label{eq:linear_regression}
   \min_{\vw \in \R^D }\| \mX \vw - \vy \|^2. %
\end{align}
The least-squares solution~\citep{hastie2009elements} for this problem that minimizes the MSE loss is $\vw^* = (\mX^\top \mX)^{\dagger} \mX^\top \vy$. Here $^{\dagger}$ denotes the pseudo inverse.
On the other hand, for mixture of experts, let the dataset be split into $E$ subsets $\gS_1, \cdots, \gS_E$, with $\mX_i, \vy_i$ denoting the data in set $\gS_i$. Now let the data from the set $\gS_i$ be routed to the expert $i$. Below we write the objective for each expert:
\begin{align}\label{eq:moe_regression}
    \min_{\vw_i \in \R^D }\|\mX_i\vw_i - \vy_i \|^2 \ \ \text{for} ~ i \in [E].
\end{align}
Similarly, let $\vw_i^* = (\mX_i^\top \mX_i)^{\dagger} \mX_i^\top \vy_i$ be the optimal solution for expert $i$ that minimizes the MSE.

\subsubsection{Analysis}
Before introducing our analysis relating the Lipschitz constant of the dense and MoE models, we first present two examples to better understand how routing affects Lipschitz constants. In particular we consider a simple setting with $E=2$ experts. 
Let $\mX^\top=[\mX_1^\top, \mX_2^\top]$, be the data routed to the two experts. Recall that the Lipschitz constant of the function $f(\mX) = \mX\vw$ is $\| \vw \|$.

\textbf{Case $\mX_1 \perp \mX_2$.}
In this setting we get 
\begin{multline*}
     \| \vw^* \| =  \|(\mX^\top \mX)^{\dagger} \mX^\top \vy \| 
     = \|(\mX_1^\top \mX_1)^{\dagger} \mX_1^\top \vy_1 + (\mX_2^\top \mX_2)^{\dagger} \mX_2^\top \vy_2\| 
     = \|\vw_1^* + \vw_2^*\| \\
     \geq \max(\|\vw_1^*\|, \|\vw_2^*\|)
\end{multline*}
The second equality follows from the folowing steps - 1) $\mX^T \mX = \mX_1^T \mX_1 + \mX_2^T \mX_2$, 2)  $(\mX_1^T \mX_1 + \mX_2^T \mX_2)^\dagger =  (\mX_1^T \mX_1)^\dagger + (\mX_2^T \mX_2)^\dagger$, and 3) $(\mX_1^T \mX_1)^\dagger \mX_2^T = 0,$ where steps 2 and 3 follow from the orthogonality of $\mX_1$ and $\mX_2$ (see \cite{kovanic1979pseudoinverse}). Hence, experts have smaller Lipschitz constant when data routed to different experts lies in orthogonal subspaces. %

\textbf{Case $\mX_1 = -\mX_2$ and $\vy_1=\vy_2$.}
In this case, we see that:
\begin{multline*}
     \| \vw^* \| =  \|(\mX^\top \mX)^{\dagger} \mX^\top \vy \| 
     = \|(\mX_1^\top \mX_1 +\mX_2^\top \mX_2)^{\dagger} \mX^\top\vy \|  \\
     = \|(\mX_1^\top \mX_1 +\mX_2^\top \mX_2)^{\dagger} (\mX_1^\top\vy_1 - \mX_1^\top\vy_1) \|
     =0  \leq \min(\|\vw_1^*\|, \|\vw_2^*\|).
     \end{multline*}

Hence experts have worse Lipschitz constant when the data routed to different experts is aligned. These two simple examples illustrate under which conditions experts have an advantage over a single dense model, and when they do not.

We now present our main result. To capture this relation between the data geometry and the routing, we introduce the following quantities. Let $\{\mU_i\}_{i \in [E]}$ be the projection matrices onto orthogonal subspaces in $\R^D$. One way to construct them is by first taking the singular vectors of $\mX^\top \mX$ and assigning them to the set $i$ with the largest projection. This guarantees that $\mU_i$ is orthogonal to $\mU_j$ with $j \neq i$, and they have greatest alignment with subspace spanned by $\mX_i$.

We first define a quantity to capture how well $\mU_i$ captures the span of the data subset $\mX_i$.
\begin{definition}[In-subspace distance: $\epsilon_1$]\label{def:one}
$\exists \epsilon_1 \geq 0$,
$\| \mU_i\mU_i^\top (\mX^\top \mX)^{\dagger} - (\mX_i^\top \mX_i)^{\dagger}\|_2 \leq \epsilon_1, \forall i\in[E]$.
\end{definition}
Here $\|.\|_2$ for a matrix denotes the spectral norm. $\epsilon_1$ is small when, $\forall i, \mU_i$ captures $\mX_i$ perfectly. 

Next we define the projection distance between data from two different subsets.
\begin{definition}[Cross-subspace distance: $\epsilon_2$] \label{def:two}
$\exists \epsilon_2 \geq 0$ such that for any $\vz$ in the span of $\mX_j^\top$, we have $\|(\mX_i^\top \mX_i)^{\dagger} \vz\| \leq \epsilon_2 \|\vz\|, \forall i \neq j$.
\end{definition}
Here $\epsilon_2$ is small if the data in different subsets $\mX_i$ lies in orthogonal subspaces.

\begin{theorem}\label{thm:fixed_routing}
Let $\vw^*$ be the minimizer of \cref{eq:linear_regression} and $\{\vw_i^*\}_{i\in[E]}$ be the minimizers of \cref{eq:moe_regression}. Then, $$  \| \vw^* \|^2 \geq \sum_{i=1}^E (\lfloor \| \vw_i^*\| - \epsilon_1 \|\mX^\top\vy\| - \epsilon_2  \sum_{j \neq i}\|\mX_j^\top\vy_j \|\rfloor_{+})^2, $$
where $\lfloor \cdot \rfloor_+$ denotes the projection onto non-negative numbers.
\end{theorem}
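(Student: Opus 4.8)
The plan is to exploit the orthogonality of the projectors $\mU_i\mU_i^\top$ to split $\|\vw^*\|^2$ into a sum over experts, and then to lower-bound the $i$-th piece by $\|\vw_i^*\|$ up to the two error terms controlled by $\epsilon_1$ and $\epsilon_2$. This matches the structure of the claimed bound exactly: one squared, clipped term per expert.

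First I would establish the orthogonality inequality
$$\|\vw^*\|^2 \;\geq\; \sum_{i=1}^E \|\mU_i \mU_i^\top \vw^*\|^2.$$
Since the $\mU_i\mU_i^\top$ are orthogonal projectors onto mutually orthogonal subspaces, the vectors $\{\mU_i\mU_i^\top\vw^*\}_{i\in[E]}$ are pairwise orthogonal, so $\|\sum_i \mU_i\mU_i^\top\vw^*\|^2 = \sum_i \|\mU_i\mU_i^\top\vw^*\|^2$; moreover $\sum_i \mU_i\mU_i^\top$ is itself the orthogonal projector onto the direct sum of the ranges, hence a contraction, giving $\|\sum_i\mU_i\mU_i^\top\vw^*\| \leq \|\vw^*\|$. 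This reduces the theorem to a per-expert lower bound on $\|\mU_i\mU_i^\top \vw^*\|$.

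Next I would unfold $\vw^* = (\mX^\top\mX)^{\dagger}\mX^\top\vy$ and use the block decomposition $\mX^\top\vy = \sum_{j=1}^E \mX_j^\top\vy_j$. Writing $\mU_i\mU_i^\top(\mX^\top\mX)^{\dagger} = (\mX_i^\top\mX_i)^{\dagger} + \mE_i$ with $\|\mE_i\|_2 \leq \epsilon_1$ by \cref{def:one}, and isolating the $j=i$ summand (which is precisely $(\mX_i^\top\mX_i)^{\dagger}\mX_i^\top\vy_i = \vw_i^*$), I obtain
$$\mU_i\mU_i^\top\vw^* = \vw_i^* + \sum_{j\neq i}(\mX_i^\top\mX_i)^{\dagger}\mX_j^\top\vy_j + \mE_i\mX^\top\vy.$$
The reverse triangle inequality then lower-bounds $\|\mU_i\mU_i^\top\vw^*\|$ by $\|\vw_i^*\|$ minus the norms of the two remaining terms. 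The error term is bounded by $\|\mE_i\|_2\|\mX^\top\vy\| \leq \epsilon_1\|\mX^\top\vy\|$, and each cross term is bounded via \cref{def:two}: since $\mX_j^\top\vy_j$ lies in the span of $\mX_j^\top$, we have $\|(\mX_i^\top\mX_i)^{\dagger}\mX_j^\top\vy_j\| \leq \epsilon_2\|\mX_j^\top\vy_j\|$, so the cross sum is at most $\epsilon_2\sum_{j\neq i}\|\mX_j^\top\vy_j\|$.

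Combining these gives $\|\mU_i\mU_i^\top\vw^*\| \geq \|\vw_i^*\| - \epsilon_1\|\mX^\top\vy\| - \epsilon_2\sum_{j\neq i}\|\mX_j^\top\vy_j\|$; because the left side is non-negative, I may replace the right side by its positive part $\lfloor\cdot\rfloor_{+}$, square (both sides being non-negative), and sum over $i$, which together with the orthogonality inequality yields the theorem. This proof is largely mechanical, so there is no deep obstacle; the two points to get right are verifying that $\sum_i\mU_i\mU_i^\top$ is a contraction (this is exactly where the mutual orthogonality of the subspaces enters, and where a general, non-orthogonal family would break the argument) and confirming that $\mX_j^\top\vy_j$ genuinely lies in the span of $\mX_j^\top$ so that \cref{def:two} is applicable. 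Everything else is triangle-inequality bookkeeping over the $\epsilon_1$ and $\epsilon_2$ error terms.
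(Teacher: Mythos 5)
Your proposal is correct and follows essentially the same route as the paper's proof: decompose $\|\vw^*\|^2$ across the orthogonal projectors $\mU_i\mU_i^\top$, then lower-bound each piece by $\|\vw_i^*\|$ minus an $\epsilon_1$ operator-error term and an $\epsilon_2$ cross-subspace term via the (reverse) triangle inequality. The only cosmetic difference is that you obtain the first step as an inequality from the contraction property of $\sum_i\mU_i\mU_i^\top$, whereas the paper writes it as an equality (valid because $\vw^*$ lies in the span of the singular vectors used to build the $\mU_i$); both are fine.
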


The above result lower bounds the Lipschitz constant of the dense model $\|\vw^*\|$ in terms of Lipschitz constants of the experts $\|\vw_i^*\|$ in the MoE model. We present the proof of this theorem in \cref{sec:proofs}. %

In the case where $\epsilon_1 = \epsilon_2 =0$, we obtain the following bound $\| \vw^* \| \geq \sqrt{\sum_{i=1}^E \| \vw_i^*\|^2}$. Assuming a balanced setting where all experts have the same norm parameters, we get the scaling $\| \vw^* \| \approx \gO(\sqrt{E})\cdot \max_{i \in [E]} \{ \|\vw^*_i\| \}$. Hence in this setting, experts have a Lipschitz constant that is smaller by a factor of $\frac{1}{\sqrt{E}}$ compared to dense models. This happens when the data lies in $E$ orthogonal subspaces and the data from each subspace is routed to the same expert. This shows that MoE models can have significantly smaller Lipschitz constant than their dense counterparts, while having the same computation cost. For MoEs in practice, data routed to different experts does display some clustering of the features (see Figure 7 in~\citet{vmoe2021}).

As data in different partitions $\mX_i$ gets more aligned, $\epsilon_1$ and $\epsilon_2$ increase, and reduce the gap between the Lipschitz constant of the dense and the MoE models. This reduces the gap in the Lipschitz constant of the dense model and the experts. In the extreme case, if all the datapoints are the same, then $\epsilon_1$ and $\epsilon_2$ are large, eliminating this difference.
In~\cref{sec:fixed_routing_new}, we give a complementary result relating the Lipschitz constants of the experts to that of the dense model when both $\epsilon_1$ and $\epsilon_2$ can possibly be large.

\textbf{Connections to \citet{bubeck2021universal}.} \citet{bubeck2021universal} proved a universal lower bound on the Lipschitz constant of a function required to $\delta$-memorize $N$ training samples in $D$ dimension.
$$ L_f \geq \tilde{\Omega}\left( \delta\sqrt{\frac{N D}{P}}\right).$$
For the linear model considered above, this reduces to $\tilde{\Omega}\left( \delta \sqrt{N}\right) $ as the number of parameters is $D$. As MoEs have $E$ times more parameters, they have a lower bound of  $\tilde{\Omega}\left( \delta \sqrt{\frac{N}{E}}\right) $, i.e., the lower bound on the Lipschitz constant of MoEs is smaller by a factor of $\sqrt{\frac{1}{E}}$. Our result shows that this lower bound is in fact achievable, and hence tight, when the data lies in $E$ orthogonal subspaces and data from each subspace is routed to the same expert.

\section{Experiments}\label{sec:experiments}

\subsection{Setup}

We compare the robustness of Vision Transformer (ViT)~\cite{vit2020} and Vision MoE (V-MoE)~\cite{vmoe2021} models against adversarial attacks. 
In particular, we use the ViT-B/32 and V-MoE-B/32 models through all the experiments. 
These models have the same backbone architecture but the latter replaces one in every two feedforward layers with a \emph{sparse} Mixture of Experts, selecting $K = 2$ out of $E = 32$ feedforward experts that are applied on each token\footnote{Token refers to an element in the sequence input, which is obtained by projecting input image patches~\citep{vit2020}}. One could argue that the router in the V-MoE model introduces an overhead that should be accounted for. Thus, we have trained a bigger version of the dense ViT-B/32 (which we coin ViT-B++/32) that reaches roughly the same predictive performance as the V-MoE model, but has higher cost. The cost of evaluating an image on the dense ViT models is 8.9 and 17.9 GFLOPs and runtime cost, respectively; and 12.4 GFLOPs on the V-MoE model. \Cref{sec:experiment_details} contains additional experimental details.

We pre-train our models on the private dataset JFT-300M~\cite{jft300m} for 7 epochs (517\,859 steps with a batch size of 4\,096 images), using an image resolution of $224 \times 224$ pixels, and standard data augmentation (inception crop and horizontal flips). Since JFT-300M is a multi-label dataset, we minimize the sigmoid cross-entropy loss. V-MoE also adds auxiliary losses to encourage a balanced load for all experts; we used the same recipe as in~\cite{vmoe2021}. 
In both cases we use Adam ($\beta_1 = 0.9, \beta_2 = 0.999$), with a peak learning rate of $8 \cdot 10^{-4}$, reached after a linear warm-up of $10^4$ steps and then linearly decayed to a final value of $10^{-5}$. Weight decay of 0.1 was used on all parameters. This is the same pre-training protocol as that used in~\cite{vit2020,vmoe2021}.

After pre-training, the models are fine-tuned on ImageNet~\cite{deng2009imagenet}, at a resolution of $384 \times 384$ pixels and the same data augmentations as before, for a total of $10^4$ steps, using a batch size of 4\,096 images. SGD with Momentum ($\mu = 0.9$) is used for fine-tuning, with a peak learning rate of $0.03$, reached after a linear warm-up of 500 steps, and followed with cosine decay to a final value of $10^{-5}$. The norm of the flattened vector of gradients is clipped to a maximum value of 10. Since ImageNet images have a single label, we minimize the softmax cross-entropy loss during fine-tuning. The same fine-tuning protocol was adopted in~\cite{vit2020,vmoe2021}.

We evaluate the adversarial robustness of both the pre-trained and fine-tuned models, by means of PGD adversarial attacks~\cite{madry2018towards}. We maximize the corresponding loss (sigmoid or softmax cross-entropy), varying the $\ell_{\infty}$ norm constraint on the input image, for a total of $\tau = 40$ steps. 

\subsection{Adversarial robustness of V-MoEs}
\label{sec:exp_adv_robustness}

During pre-training the ViT-B/32 model achieves a precision at 1 of 39.3\%, and the V-MoE-B/32 achieves a precision-at-1 of 43.5\% (conversely, the false discovery rate at 1 is 60.7\% and 56.5\%, respectively). After fine-tuning, the classification error achieved by each model on ImageNet is 19.3\% and 17.8\%, respectively.

\Cref{fig:vit_vs_vmoe_main_plot} shows in solid lines the false discovery rate (left) and the classification error rate (right) as a function of the $\ell_{\infty}$ constraint. Despite the fact that the V-MoE model contains a router that makes discrete choices among the experts conditioned on the input, which could potentially lead to a severe weakness against the adversarial attacker, we can observe that it follows the same trend as the base dense ViT model. It is able to preserve a lower error over a wide range of $\ell_{\infty}$ values. A larger version of the dense model matching the quality of the V-MoE has a much higher cost.

\begin{figure}[bt]
\centering
\includegraphics[width=\textwidth]{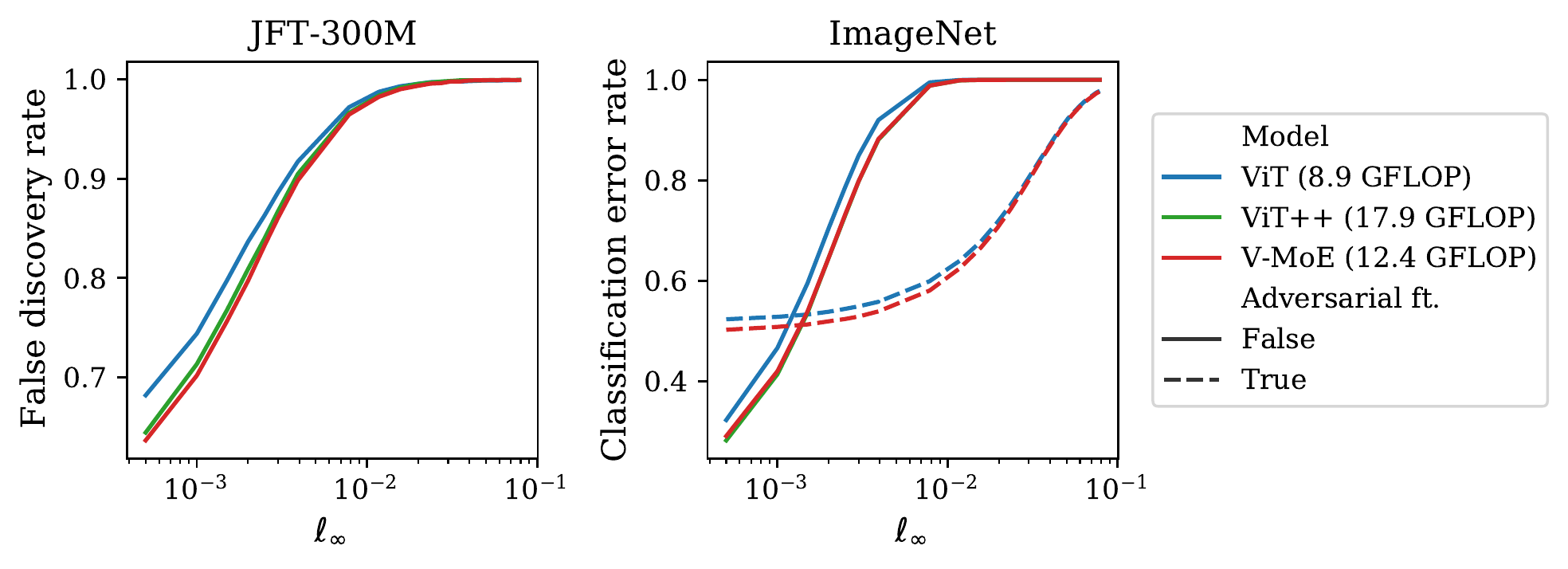}
\caption{False discovery rate on JFT-300M (left) and classification error rate on ImageNet (right) as a function of the $\ell_{\infty}$ used in the adversarial attacks.
Dashed lines depict models fine-tuned with PGD adversarial training. 
Although the V-MoE model contains several sparse MoE layers making discrete choices on their respective inputs, it shows lower error under adversarial attacks than the base ViT model. 
A much bigger and slower ViT model is needed to roughly match the V-MoE.}
\label{fig:vit_vs_vmoe_main_plot}
\end{figure}

In addition, we also fine-tuned the base ViT and V-MoE models on ImageNet using PGD adversarial training.
We use the same recipe as above but we perform a PGD attack of 10 steps on the input images, with fixed $\ell_{\infty} = \frac{8}{255}$, before computing the gradients of the model parameters and updating them. The classification error on the original ImageNet dataset achieved by each model after adversarial fine-tuning is 51.7\% and 49.8\%, for both models respectively.
\Cref{fig:vit_vs_vmoe_main_plot} (right) shows in dashed lines the classification error when these models are evaluated against an adversarial attacker using different $\ell_{\infty}$ constraints. As the $\ell_{\infty}$ increases, the benefit of adversarial fine-tuning to preserve accuracy is shown in both cases. Once again, both models report similar trends, and the V-MoE model shows better robustness for a wide range of $\ell_{\infty}$ values.

\subsection{Effect of the adversarial attacks on the selected experts}\label{subsec:adv_select_experts}
As described in \cref{subsec:learned_routing}, in the region close to the decision boundary of the router, if two experts have very different outputs, the Lipschitz constant of the MoE model could be much higher than that of a similar dense model. If the index of the selected experts changes significantly, but the model still shows a reasonably high accuracy under adversarial attacks (compared to a dense model), this would suggest that the outputs of the two selected sets of experts do not differ much.

\Cref{fig:vmoe_routing_changes} shows the rate of changes in the router as a function of the $\ell_{\infty}$ used in the adversarial attack, on the different layers of the V-MoE-B/32 model that have a MoE layer. For each token processed by the model, we compute the intersection-over-union (IoU) of the selected set of experts before and after the adversarial attack. We average the IoU across all processed tokens and define the rate of routing changes as the complement of the average IoU.

In dashed lines we report the rate of changes of a V-MoE model fine-tuned using PGD adversarial training. Not only the model has a better accuracy against adversarial attacks when the $\ell_{\infty}$ increases, as reported in \cref{fig:vit_vs_vmoe_main_plot}, but the rate of routing changes is also generally lower across all layers.

\begin{figure}[hbt]
\centering
\includegraphics[width=\textwidth]{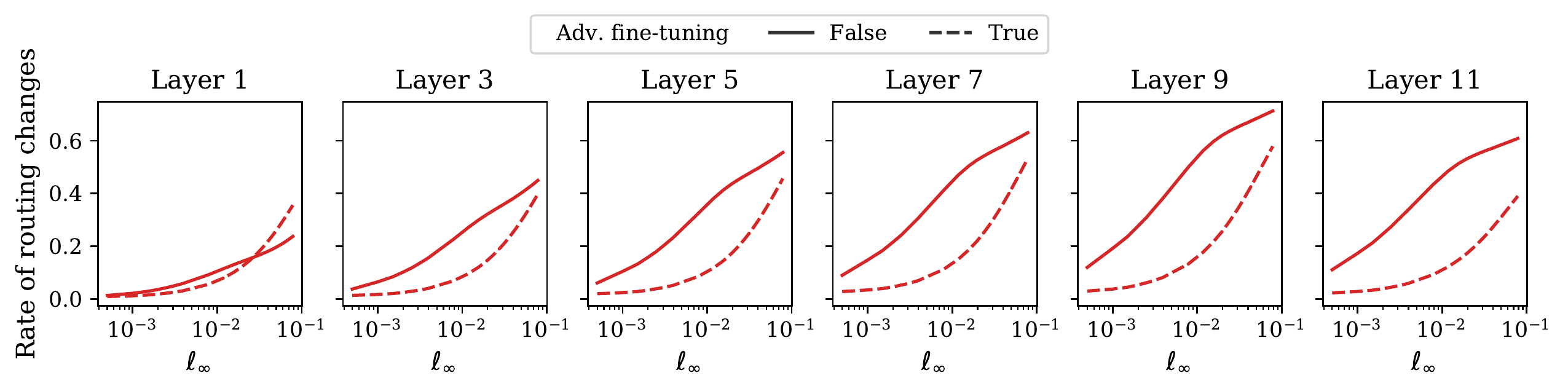}
\caption{Rate of changes in the selection of experts as a function of the $\ell_{\infty}$ used in the adversarial attacks on ImageNet against a V-MoE model. The dashed line depicts a V-MoE model fine-tuned with PGD adversarial training. A significant fraction of experts change \emph{in each MoE layer}. Given that \cref{fig:vit_vs_vmoe_main_plot} shows the V-MoE model keeping its advantage over the dense ViT, we hypothesize that the output of different experts for inputs close to the decision boundary of the router is similar.%
}
\label{fig:vmoe_routing_changes}
\end{figure}

Despite the fact that a significant fraction of choices change \emph{on each layer} as the $\ell_{\infty}$ increases, \cref{fig:vit_vs_vmoe_main_plot} shows that the V-MoE model still keeps its  advantage against adversarial attacks over the ViT model. This suggests that in the regions close to a decision boundary of the router, the two corresponding experts have a similar output, hence preventing the full V-MoE model from being less robust in practice. Conversely, the fact that the V-MoE model has a better base quality than the ViT counterpart, suggests that the experts are not equivalent for regions far away from the decision boundary, otherwise it would be reduce to a dense model.

\subsection{Attacking the router's auxiliary losses}
Sparse MoE models usually employ auxiliary losses to balance the load among all experts.
In particular, in the implementation of V-MoEs, if the load of the experts is highly unbalanced, the experts receiving significantly more tokens than the average could ignore all tokens that exceed the expert's capacity, potentially leading to a significantly worse performance. 
The question is whether an adversarial attacker can exploit this property in practice. 
\Cref{fig:vmoe_auxiliary_loss_and_num_total_experts} (left) shows the false discovery rate on JFT-300M, for a V-MoE model when the router's auxiliary losses are maximized in the adversarial attack, together with the corresponding cross-entropy loss. We use the same weight for the auxiliary losses as the one used to train the models. The figure shows that attacking the auxiliary loss does not offer any significant advantage for the attacker. The results are analogous in terms of the classification error on ImageNet (not shown here in interest of space).

\begin{figure}[htb]
\centering
\includegraphics[width=\textwidth]{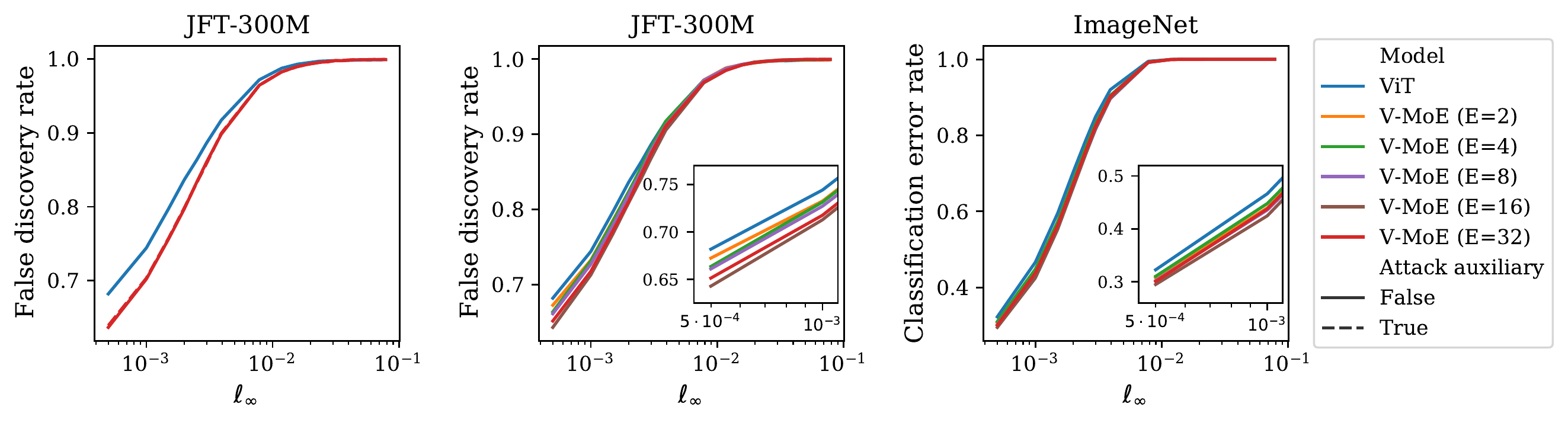}
\caption{%
\textbf{(Left)} False discovery rate on JFT-300M for a ViT-B/32 and a V-MoE-B/32 models (with 32 experts). Dashed lines represent the results when the attacker targets both the cross-entropy and the auxiliary losses used by the V-MoE routers to balance the load among experts in each layer. The auxiliary losses of V-MoE do not present a disadvantage against an adversarial attacker.
\textbf{(Center and Right)} False discovery rate on JFT-300M and classification error rate on ImageNet for a ViT-B/32 and several V-MoE-B/32 models (with increasing number of total experts).
On JFT-300M the quality of the model against adversarial attacks improves as more experts are used. After fine-tuning on ImageNet, the number of experts becomes increasingly redundant.}
\label{fig:vmoe_auxiliary_loss_and_num_total_experts}
\end{figure}

\subsection{Increasing the model size by increasing the total number of experts}

In \cref{sec:intro} we asked if, given that the lower bound on the Lipschitz constant given by \citet{bubeck2021universal} is agnostic to the computational cost of the function, could we make a model more robust by increasing its model size without increasing the total cost? Here we measure how much increasing the total number of experts in a V-MoE model helps against adversarial attacks. Notice that increasing the total number of experts $E$  does not make the V-MoE model more expensive.
\Cref{fig:vmoe_auxiliary_loss_and_num_total_experts} (center and right) shows the false discovery rate on JFT-300M and the classification error on ImageNet for an increasing number of experts. All V-MoE models select $K=2$ experts for each token.

On the one hand, increasing the total number of experts improves the robustness on JFT-300M up to $E = 16$ experts. The curves for $E = 16$ and $E = 32$ are highly overlapping, thus any difference is most likely due to noise in the training and fine-tuning process. On the other hand, when the V-MoEs are fine-tuned on ImageNet, all models with more than two experts achieve roughly the same accuracy under adversarial attacks.

This shows that, although the results presented in \cref{sec:robustness_analysis} showing better robustness of MoEs require some assumptions, the conclusions hold to some extent in real scenarios. Increasing the number of parameters by growing the number of experts is an effective way of improving the model's robustness.

\subsection{Robustness against AutoPGD attacks}
\label{sec:autopgd}
We conducted additional experiments using a more sophisticated adversarial attack, AutoPGD~\citep{pmlr-v119-croce20b}, which selects the step size to use in each update of the attack. We also increased the number of steps performed in the attack, using $\tau = 100$ steps. \Cref{fig:vit_vs_vmoe_autopgd_gflops} shows the results achieved by two dense (ViT-B/32 and ViT-B++/32) models and a sparse model (V-MoE-B/32).
Although AutoPGD is slightly more effective as an adversarial attack against all methods (for example, with $\ell_\infty = 10^{-3}$ the false discovery rate on JFT-300M of ViT-B/32 is 0.744 using PGD and 0.757 using AutoPGD), the trend of all models is identical to that represented in \cref{fig:vit_vs_vmoe_main_plot}.

\begin{figure}
\begin{subfigure}[b]{0.48\textwidth}
\centering
\includegraphics[width=\textwidth]{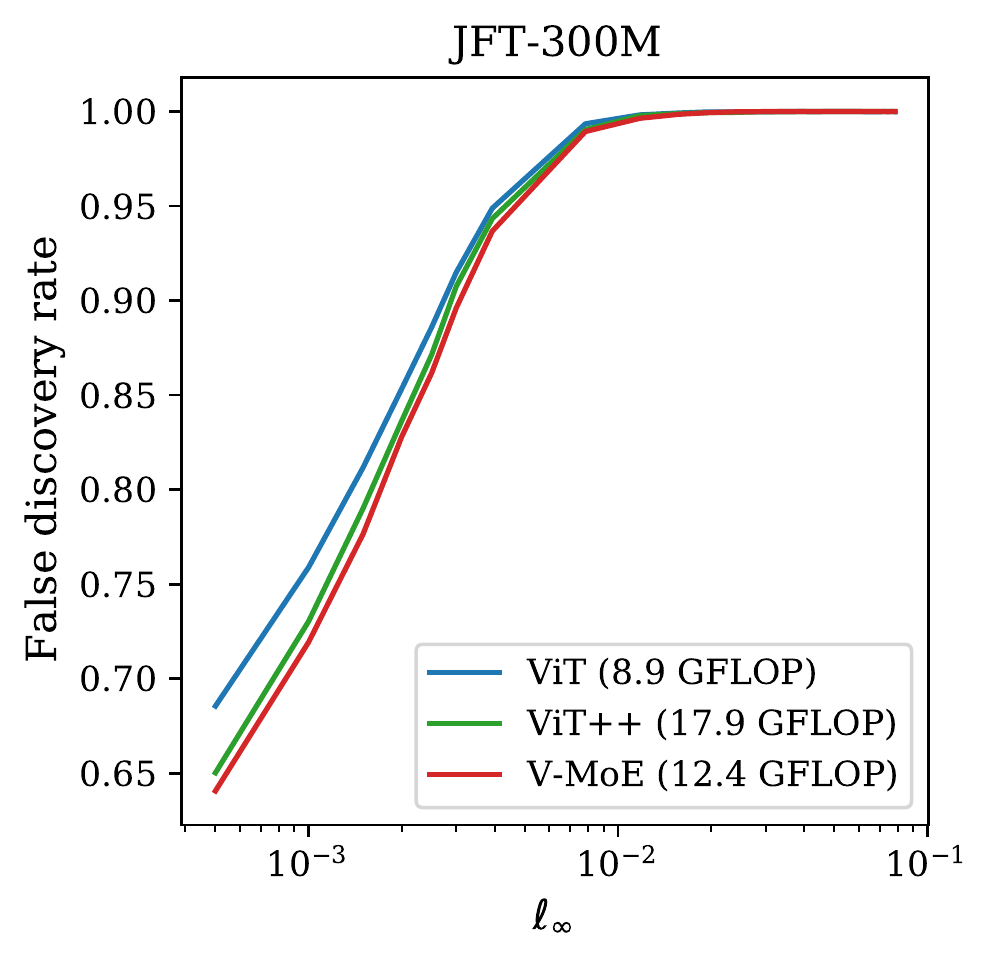}
\caption{False discovery rate on JFT-300M (left) and classification error rate on ImageNet (right) as a function of the $\ell_{\infty}$, using AutoPGD and yielding
the same conclusions as the simpler PGD method.}
\label{fig:vit_vs_vmoe_autopgd_gflops}
\end{subfigure}
\hfill
\begin{subfigure}[b]{0.48\textwidth}
\centering
\includegraphics[width=\textwidth]{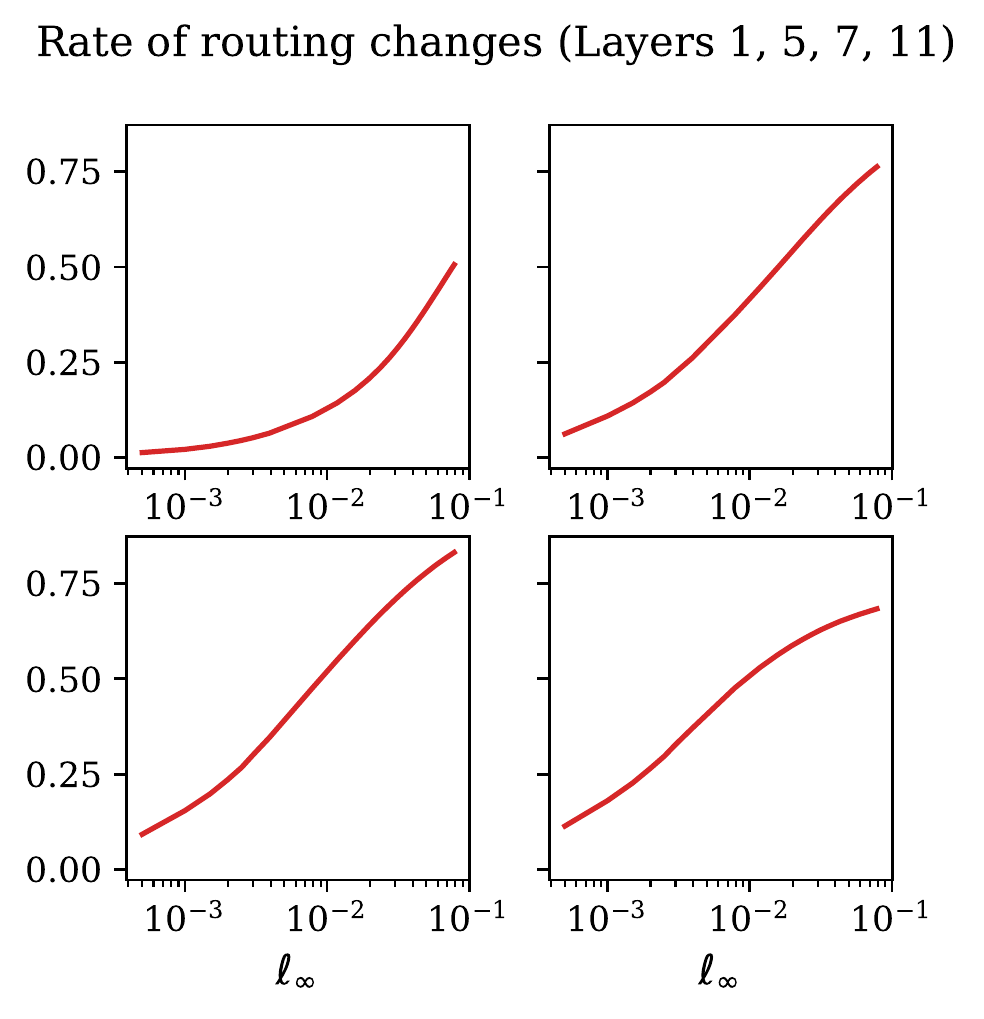}
\caption{Rate of changes in the selection of experts as a function of the $\ell_{\infty}$ used in the adversarial attacks on ImageNet against a V-MoE model.}
\label{fig:vmoe_routing_changes_autopgd}
\end{subfigure}
\end{figure}

\Cref{fig:vmoe_routing_changes_autopgd} shows the rate of routing changes in the different MoE layers in the V-MoE-B/32 model. This figure is analogous to \cref{fig:vmoe_routing_changes} which shows the results for the standard PGD attack. Compared to it, AutoPGD is able to change a higher fraction of the selected experts. For example, the maximum fraction of changes in Layer 1 using PGD was near 0.2, while AutoPGD increases it up to around 0.5.

AutoPGD offers the same conclusion as the PGD attacks: models using sparse MoE layers offer better robustness against adversarial attacks (per GFLOP) than dense models, despite the fact that the router itself can be quite sensible to these attacks.

\section{Conclusion}\label{sec:conclusion}
In this work we analyzed the adversarial robustness of MoEs showing their advantage over dense models, with more experts leading to better robustness, both theoretically and empirically.
We showed how the properties of the data and its routing plays and important role in learning robust MoEs. While there is some evidence that routing learned by MoEs in practice display some clustering of the features in some layers of the model (see Figure 7 in ~\citet{vmoe2021}), it is currently not explicitly encouraged during training. Hence developing smarter routing strategies that take data geometry into account can be an interesting direction of future work. Currently our analysis is limited to linear models, extending this to general models and deriving the dependency of optimal routing on them is another promising research direction.

We have also shown that, for inputs that weigh two experts similarly, if the two expert values are very different, then the MoEs can suffer from higher Lipschitz constant. However, for models trained in practice, we saw their predictions to be relatively stable, despite significant changes in choice of experts, highlighting potential redundancy of learned experts. However too much redundancy, with all experts learning similar functions is a waste of capacity and can affect model performance. Hence it is an interesting research problem to balance robustness and accuracy of MoEs by controlling the redundancy of experts.

\begin{ack}
All authors would like to thank Sven Gowal for sharing their AutoPGD attack codebase.
We also thank Neil Houlsby, Basil Mustafa, and André Susano Pinto for providing insightful feedback while working on this project.
\end{ack}

{
\small
\bibliographystyle{abbrvnat}
\bibliography{paper}
}

\section*{Checklist}

\begin{enumerate}
\item For all authors...
\begin{enumerate}
  \item Do the main claims made in the abstract and introduction accurately reflect the paper's contributions and scope?
    \answerYes{}
  \item Did you describe the limitations of your work?
    \answerYes{} Please see \cref{sec:intro,sec:conclusion}.
  \item Did you discuss any potential negative societal impacts of your work?
    \answerNA{} Our work analyzes adversarial robustness of popular models.
  \item Have you read the ethics review guidelines and ensured that your paper conforms to them?
    \answerYes{}
\end{enumerate}

\item If you are including theoretical results...
\begin{enumerate}
  \item Did you state the full set of assumptions of all theoretical results?
    \answerYes{} Please see sections \cref{subsec:learned_routing,subsec:data_routing}.
        \item Did you include complete proofs of all theoretical results?
    \answerYes{} Please see \cref{sec:proofs}.
\end{enumerate}

\item If you ran experiments...
\begin{enumerate}
  \item Did you include the code, data, and instructions needed to reproduce the main experimental results (either in the supplemental material or as a URL)?
    \answerNo{} The pre-training data is proprietary and cannot be publicly shared. 
    \Cref{sec:experiment_details} includes a description of the software stack used and a link to the software that we used to run all experiments.  %
  \item Did you specify all the training details (e.g., data splits, hyperparameters, how they were chosen)?
    \answerYes{} Please see \cref{sec:experiments}.
        \item Did you report error bars (e.g., with respect to the random seed after running experiments multiple times)?
    \answerNo{} We notice fairly stable adversarial accuracy metrics across different settings.
        \item Did you include the total amount of compute and the type of resources used (e.g., type of GPUs, internal cluster, or cloud provider)?
    \answerYes{} The hardware specifications and the total compute used in the experiments are detailed in \cref{sec:experiment_details}.
\end{enumerate}

\item If you are using existing assets (e.g., code, data, models) or curating/releasing new assets...
\begin{enumerate}
  \item If your work uses existing assets, did you cite the creators?
    \answerYes{} Please see \cref{sec:experiments}.
  \item Did you mention the license of the assets?
    \answerNo{} Part of the data that we use is proprietary. The public data that we use (ImageNet) is publicly available after registration on \url{https://image-net.org}, and broadly used by the Machine Learning and Computer Vision communities.
  \item Did you include any new assets either in the supplemental material or as a URL?
    \answerNo{} Part of code we use is proprietary.
  \item Did you discuss whether and how consent was obtained from people whose data you're using/curating?
    \answerNA{}
  \item Did you discuss whether the data you are using/curating contains personally identifiable information or offensive content?
    \answerNA{}
\end{enumerate}

\item If you used crowdsourcing or conducted research with human subjects...
\begin{enumerate}
  \item Did you include the full text of instructions given to participants and screenshots, if applicable?
    \answerNA{}
  \item Did you describe any potential participant risks, with links to Institutional Review Board (IRB) approvals, if applicable?
    \answerNA{}
  \item Did you include the estimated hourly wage paid to participants and the total amount spent on participant compensation?
    \answerNA{}
\end{enumerate}

\end{enumerate}

\newpage

\appendix

\section{Proofs}\label{sec:proofs}
\begin{proof}[Proof of \cref{lem:learned_lipschitz}]
We have by chain rule
\begin{align*}
    \nabla_{\vx} f(\vx) &= \sum_{i=1}^k p_i(\vx) \nabla f_i(\vx) +  \sum_{i=1}^k \nabla p_i(\vx)  f_i(\vx).
\end{align*}
Hence, \begin{align*}
   \|\nabla_{\vx} f(\vx)\| &\leq \sum_{i=1}^k p_i(\vx) L_{f_i} +  \|\sum_{i=1}^k \nabla p_i(\vx)  f_i(\vx)\|,
\end{align*}
where
\begin{align*}
    \nabla p_i(\vx) &= \nabla \frac{\exp(\ip{\vs_i}{\vx})}{\sum_j \exp(\ip{\vs_j}{\vx})} \\
    &=\frac{\sum_j \exp(\ip{\vs_j}{\vx}) \cdot \exp(\ip{\vs_i}{\vx}) \vs_i - \exp(\ip{\vs_i}{\vx}) \cdot \sum_j \exp(\ip{\vs_j}{\vx}) \vs_j }{\sum_j \exp(\ip{\vs_j}{\vx}) \cdot \sum_j \exp(\ip{\vs_j}{\vx})} \\
    &= \frac{\exp(\ip{\vs_i}{\vx}) \vs_i}{\sum_j \exp(\ip{\vs_j}{\vx})} - p_i \frac{\sum_j \exp(\ip{\vs_j}{\vx}) \vs_j}{\sum_j \exp(\ip{\vs_j}{\vx})}\\
    &= p_i \vs_i - p_i \sum_j p_j \vs_j.
\end{align*}
Let $\bar{\vs}(\vx) = \sum_j p_j(\vx) \vs_j$. Substituting this in the above equation gives us
\begin{align*}
    \|\nabla_{\vx} f(\vx)\| &\leq \sum_{i=1}^k p_i(\vx) L_{f_i} +  \|\sum_{i=1}^k p_i(\vx) f_i(\vx) (\vs_i - \bar{\vs}(\vx))\| \\
   &\leq \max_{i}\{ L_{f_i} \} + \max_x \|\sum_{i=1}^E p_i(\vx) f_i(\vx) (\vs_i - \bar{\vs}(\vx))\|.
\end{align*}
The last inequality follows from Holder's inequality and $\sum_{i=1}^k p_i(\vx) = 1$.
\end{proof}

\begin{proof}[Proof of \cref{thm:fixed_routing}]
Since $\mU_i$ are orthogonal to each other, we can decompose $\| \vw^* \|$ as follows.
\begin{multline}\label{eq:proof_first}
    \| \vw^* \|^2 = \|(\mX^\top \mX)^{\dagger} \mX^\top\vy \|^2 = \| (\sum_i \mU_i\mU_i^\top) (\mX^\top \mX)^{\dagger} \mX^\top\vy \|^2  = \sum_i \| \mU_i\mU_i^\top (\mX^\top \mX)^{\dagger} \mX^\top\vy \|^2.
\end{multline}

From \cref{def:one} we have, for any unit norm vector $\vz$,
\begin{align*}
\epsilon_1  &\geq \| \mU_i\mU_i^\top (\mX^\top \mX)^{\dagger} - (\mX_i^\top \mX_i)^{\dagger}\|_2  \\
&\geq \| \mU_i\mU_i^\top (\mX^\top \mX)^{\dagger} \vz - (\mX_i^\top \mX_i)^{\dagger}\vz \| \\
&\geq \absl \| \mU_i\mU_i^\top (\mX^\top \mX)^{\dagger} \vz\| -  \|(\mX_i^\top \mX_i)^{\dagger}\vz \| \absr
\end{align*}
The last step follows from triangle inequality. This implies $$ \|(\mX_i^\top \mX_i)^{\dagger}\vz\|\leq \| \mU_i\mU_i^\top (\mX^\top \mX)^{\dagger} \vz\| + \epsilon_1.$$ Hence, by setting $\vz = \mX^\top\vy / \|\mX^\top\vy\|$, we see that

\begin{align*}
    \| \mU_i\mU_i^\top (\mX^\top \mX)^{\dagger} \mX^\top\vy \| &\geq  \| (\mX_i^\top \mX_i)^{\dagger} \mX^\top\vy \| - \epsilon_1 \|\mX^\top\vy\|.
\end{align*} 

Recall that $ \mX^\top \vy = \sum_{j=1}^E \mX_j^\top \vy_j$. Then using triangle inequality, and the above inequality, we get,
\begin{align*}
   \| \vw_i^*\|&= \|(\mX_i^\top \mX_i)^{\dagger} \mX_i^\top \vy_i\| \\
   &\leq  \| (\mX_i^\top \mX_i)^{\dagger} \mX^\top\vy \| + \sum_{j \neq i}\| (\mX_i^\top \mX_i)^{\dagger} \mX_j^\top\vy_j \| \\
   &\leq \| \mU_i\mU_i^\top (\mX^\top \mX)^{\dagger} \mX^\top\vy\| + \epsilon_1 \|\mX^\top\vy\|+ \sum_{j \neq i} \| (\mX_i^\top \mX_i)^{\dagger} \mX_j^\top\vy_j \| \\
   &\leq \| \mU_i\mU_i^\top (\mX^\top \mX)^{\dagger} \mX^\top\vy\| + \epsilon_1 \|\mX^\top\vy\|+ \epsilon_2  \sum_{j \neq i}\|\mX_j^\top\vy_j \|. 
\end{align*}
The last step follows from \cref{def:two}. Thus, \begin{align*}
     \| \mU_i\mU_i^\top (\mX^\top \mX)^{\dagger} \mX^\top\vy\| &\geq  \| \vw_i^*\| - \epsilon_1 \|\mX^\top\vy\| - \epsilon_2   \sum_{j \neq i}\|\mX_j^\top\vy_j \|.
\end{align*}
Substituting this in \cref{eq:proof_first} gives us the result. 
\begin{align*}
    \| \vw^* \|^2 \geq \sum_{i} (\lfloor \| \vw_i^*\| - \epsilon_1 \|\mX^\top\vy\| - \epsilon_2  \sum_{j \neq i}\|\mX_j^\top\vy_j \|\rfloor_{+})^2.
\end{align*}
\end{proof}

\section{Experiments details}
\label{sec:experiment_details}

\subsection{Architectures}

\Cref{tab:architectures} contains the details of the architectures used in \cref{sec:experiments}. All V-MoE models have replaced one in every two feedforward layers in the original ViT-B/32 architecture, with a MoE of feedforward layers, and select only $K=2$ experts out of the $E$ available experts. All models were pre-trained on JFT-300M, at a resolution of $224 \times 224$ pixels, for a total number of 7 epochs, using the same batch size and optimizer settings as used by \citet{vit2020} and \citet{vmoe2021} (see details \cref{sec:experiments}). We used the B/32 variants of ViT and V-MoE since these are the ``base'' configurations suggested in the respective papers. 

In order to match the quality of a dense ViT and a sparse V-MoE model, we also trained a bigger version of ViT-B/32, which we refer to as ViT-B++/32. The values for the number of layers, number of attention heads, embedding dimension and the hidden dimension of the FFN, are simply an interpolation between the ViT-B/32 and the ViT-L/32 corresponding values.

\begin{table}[htb]
\centering
\caption{Architecture details of all the models used in the experiments. We specify here the number of layers in the Transformer encoder, the number of heads used in the multi-head attention, the embedding dimension (i.e. token size), the hidden size in the feedforward (FFN) layers and experts, the selected and total number of experts (when applicable), the active/total number of parameters, and the cost of evaluation per image (at a resolution of $224 \times 224$ pixels). Precision-at-1 on the pre-training dataset (JFT-300M) and classification error rate on ImageNet (ILSVRC2012) are also reported.}
\label{tab:architectures}
\begin{tabular}{lrrrrrrr}
\toprule
Name   &  ViT-B/32 & ViT-B++/32 & \multicolumn{5}{c}{V-MoE-B/32} \\
       &           &            & E=2 & E=4 & E=8 & E=16 & E=32\\
\midrule     
Layers                  &   12 &     18 &   \multicolumn{5}{c}{12}\\
Heads                   &   12 &     14 &   \multicolumn{5}{c}{12}\\
Embedding dim.          &  768 &    896 &  \multicolumn{5}{c}{768}\\
FFN dim.                & 3072 &   3584 & \multicolumn{5}{c}{3072}\\
Selected/Total experts  &  --- &    --- & 2/2 & 2/4 & 2/8 & 2/16 & 2/32\\ 
Active parameters (M)       & 102.1 & 193.6 & 130.5 & 130.5 & 130.5 & 130.5 & 130.6 \\
Total parameters (M)        & 102.1 & 193.6 & 130.5 & 187.1 & 300.5 & 527.2 & 980.6\\
Eval. GFLOPs/image          &   8.9 &  17.9 &  12.1 &  12.2 &  12.2 &  12.3 &  12.4 \\
\midrule     
JFT-300M P@1   (\%)     & 39.3 &   42.9 & 40.5 & 41.4 & 42.7 & 43.6 & 43.5\\
ILSVRC2012 error (\%)   & 19.3 &   17.2 & 18.9 & 18.7 & 18.0 & 17.8 & 17.8\\
\bottomrule
\end{tabular}
\end{table}

We implemented all models, training and evaluation code using JAX\footnote{\url{https://github.com/google/jax}} and FLAX\footnote{\url{https://github.com/google/flax}}.
Although we cannot release the models used in the experiments, since they are pre-trained on proprietary data, the code used in all of them is available at \url{http://github.com/google-research/vmoe}.

\subsection{Compute resources} 

We use TPUv3 to train and evaluate our models. In particular, we used 32 TPUv3 cores for pre-training and fine-tuning the models. In most of the evaluation experiments against adversarial attacks we used 8 TPUv3 cores, since the compute needed to perform the evaluation against adversarial attack is much lower. The total training cost (in terms of total training runtime and FLOPs) of ViT-B/32 and V-MoE-B/32 can be found in \cite{vmoe2021}.

ViT-B/32 is pre-trained for a total of 27.6 TPUv3-core-days, performing 56.1 ExaFLOPs. V-MoE-B/32 (with 32 experts) is trained for a total of 54.9 TPUv3-core-days and performs a total of 76.1 ExaFLOPs. Finally, the larger dense ViT-B++/32 is trained for a total of 78.3 TPUv3-core-days and performs 113.4 ExaFLOPs.

The GFLOPs are computed automatically by JAX. For instance, to compute the
GFLOPs used for each individual image during the evaluation of the models, the code snippet in \cref{lst:gflops} is used.
\lstset{style=python}
\begin{lstlisting}[language=Python, caption={Code snippet used to compute the evaluation GFLOPs per image.}, captionpos=t, label={lst:gflops}]
client = jax.lib.xla_bridge.get_backend()
# eval_step_fn is the function called to evaluate one batch.
m = jax.xla_computation(eval_step_fn)(...).as_hlo_module()
analysis = jax.lib.xla_client._xla.hlo_module_cost_analysis(client, m)
eval_gflops_per_image = analysis['flops'] / 10**9 / batch_size
\end{lstlisting}

\newpage
\section{Adversarial examples}
\begin{figure}[hbt]
\centering
\includegraphics[width=\textwidth]{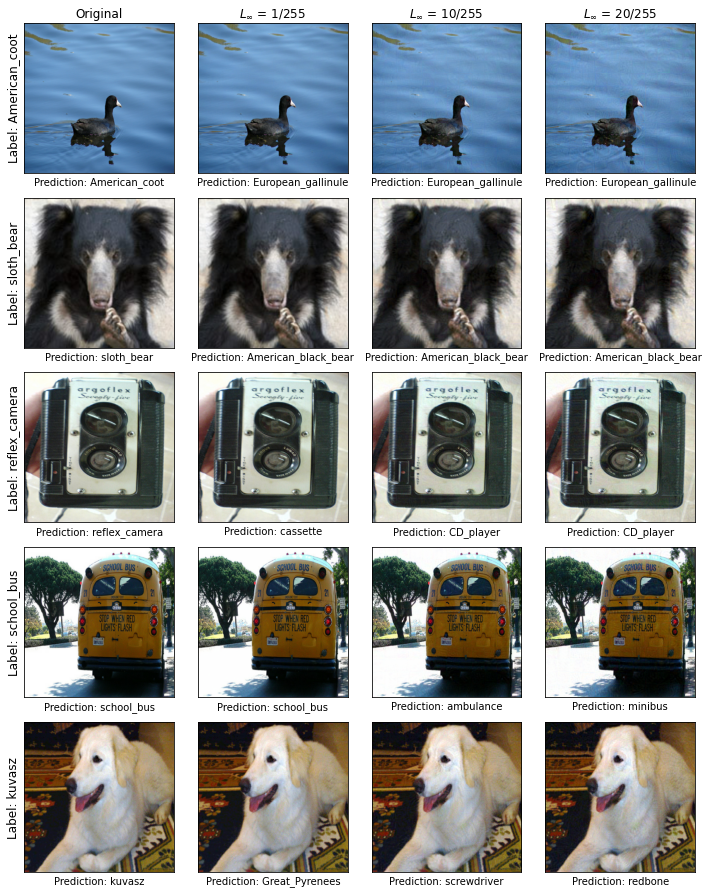}
\caption{ImageNet adversarial examples generated when attacking a V-MoE-B/32, for different values of $L_{\inf}$. Although the differences are hard to spot (better seen in a monitor, zooming in the images), they are sufficient to change the prediction of the model.}
\label{fig:adversarial_examples}
\end{figure}

\section{A complementary result to \cref{thm:fixed_routing}}
\label{sec:fixed_routing_new}

We now present our complementary result. 
In essence, in a regression setting, our result tries to capture how different the sub-problems tackled by the experts should be in order to get improved robustness compared with a dense approach.

In what follows, we use the shortcuts $\mZ = \mX^\top \mX \in \R^{D\times D}$ and $\mZ_i = \mX_i^\top \mX_i \in \R^{D\times D}$ for $i \in [E]$. Also, we recall that each $\mX_i$ is in $\R^{N_i \times D}$ with $\sum_{i=1}^E N_i = N$.

\begin{theorem}\label{thm:fixed_routing_new}
Let $\vw^*$ be the minimizer of \cref{eq:linear_regression} and $\{\vw_i^*\}_{i\in[E]}$ be the minimizers of \cref{eq:moe_regression}.
For all $i \in [E]$, let us assume that there exist $\beta_i\in \R^D$ and $\vr_i \in \R^{N_i}$ such that we can write
$$
\vy_i = \mX_i \beta_i + \vr_i
$$
where the vectors $\vr_i$'s are residual terms that can for instance account for non-linear effects in $\mX_i$ and/or some stochastic noise.
Further assume the $\{\mZ_i\}_{i\in [E]}$ to be invertible and define
$$
\eta_i = \mZ_i^{-1} \mX_i^\top \vr_i
\quad\text{and}\quad
\eta = \mZ^{-1} \mX^\top \vr
\ \ \text{with}\ \ \vr = [\vr_1,\dots, \vr_E] \in \R^N.
$$
Let us denote the (normalized) difference between any two vectors $(\beta_i, \beta_j)$ as $\Delta_{ij} \in \R^D$, that is
$$
\text{for any}\ \ i \neq j,\ \
\Delta_{ij} = \mZ^{-1} \mZ_j (\beta_i - \beta_j),
$$
and consider the cumulative difference with respect to the vector $\beta_i$ as
$$
\text{for any}\ \ i \in [E],\ \
\Delta_i = \sum_{j \neq i} \Delta_{ij} \in \R^D.
$$
Consider $\gamma \in (0, 1]$. 
If the structures of the underlying sub-problems on $\{(\mX_i, \vy_i)\}_{i\in[E]}$ differ sufficiently from each other in the following sense
$$
\text{for all}\ i \in [E],\ \
\|\Delta_i\| \geq \|\beta_i+\eta\| + \frac{1}{\sqrt{\gamma}} \|\beta_i+\eta_i \|,
$$
then it holds that
$$
\max_{i\in[E]}\|\vw_i^*\|^2 \leq \gamma \|\vw^*\|^2.
$$
In other words, the experts have smaller Lipschitz constants than the dense model by a factor $\gamma$.
As a corollary, if we can control the residual terms $\vr_i$'s such that 
$$
\|\eta\| \leq \min_{i \in [E]} \| \beta_i\|
\ \ \text{and for all}\ i \in [E],\ \|\eta_i\| \leq \|\beta_i\|,
$$
then the same conclusion is implied by the simpler sufficient condition
$$
\min_{i \in [E]}\big\{\|\Delta_i\| -  4/\sqrt{\gamma} \cdot \|\beta_i\| \big\} \geq 0.
$$
\end{theorem}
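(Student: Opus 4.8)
The plan is to reduce everything to a single clean algebraic identity relating $\vw^*$ to each $\vw_i^*$, after which the conclusion follows from the reverse triangle inequality. First I would rewrite both solutions in the $(\beta_i, \eta_i, \eta)$ coordinates introduced in the statement. Substituting $\vy_i = \mX_i \beta_i + \vr_i$ into $\vw_i^* = \mZ_i^{-1}\mX_i^\top \vy_i$ and using $\mZ_i^{-1}\mZ_i = \mI$ gives immediately
\[
\vw_i^* = \beta_i + \eta_i,
\]
so that $\|\vw_i^*\| = \|\beta_i + \eta_i\|$.

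Next I would expand the dense solution. Since $\mX^\top \vy = \sum_{j} \mX_j^\top \vy_j = \sum_j \mZ_j \beta_j + \mX^\top \vr$ and $\mZ^{-1}\mX^\top\vr = \eta$, we obtain $\vw^* = \sum_j \mZ^{-1}\mZ_j \beta_j + \eta$. The crucial observation is that $\mZ = \sum_j \mZ_j$, hence $\sum_j \mZ^{-1}\mZ_j = \mZ^{-1}\mZ = \mI$. Writing $\mZ^{-1}\mZ_i = \mI - \sum_{j\neq i}\mZ^{-1}\mZ_j$ and collecting terms around $\beta_i$ yields the key identity
\[
\vw^* = (\beta_i + \eta) - \sum_{j\neq i}\mZ^{-1}\mZ_j(\beta_i - \beta_j) = (\beta_i + \eta) - \Delta_i,
\]
valid for every $i \in [E]$. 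I expect this bookkeeping step---isolating $\beta_i$ and recognizing the residual sum as exactly $\Delta_i$---to be the main (though not deep) obstacle; everything else is routine manipulation.

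From the identity, the reverse triangle inequality gives $\|\vw^*\| \geq \|\Delta_i\| - \|\beta_i + \eta\|$. The hypothesis $\|\Delta_i\| \geq \|\beta_i+\eta\| + \tfrac{1}{\sqrt\gamma}\|\beta_i+\eta_i\|$ then yields $\|\vw^*\| \geq \tfrac{1}{\sqrt\gamma}\|\beta_i+\eta_i\| = \tfrac{1}{\sqrt\gamma}\|\vw_i^*\|$, i.e. $\|\vw_i^*\|^2 \leq \gamma\|\vw^*\|^2$. Taking the maximum over $i$ proves the main claim.

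For the corollary I would bound the right-hand side of the hypothesis under the stated control on the residuals. Since $\|\eta\|\leq \min_i\|\beta_i\| \leq \|\beta_i\|$ and $\|\eta_i\|\leq\|\beta_i\|$, the triangle inequality gives $\|\beta_i+\eta\|\leq 2\|\beta_i\|$ and $\|\beta_i+\eta_i\|\leq 2\|\beta_i\|$. Because $\gamma\in(0,1]$ we have $1 \leq \tfrac{1}{\sqrt\gamma}$, so $1 + \tfrac{1}{\sqrt\gamma} \leq \tfrac{2}{\sqrt\gamma}$ and therefore $\|\beta_i+\eta\| + \tfrac{1}{\sqrt\gamma}\|\beta_i+\eta_i\| \leq 2\|\beta_i\|\big(1+\tfrac{1}{\sqrt\gamma}\big) \leq \tfrac{4}{\sqrt\gamma}\|\beta_i\|$. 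Hence $\|\Delta_i\| \geq \tfrac{4}{\sqrt\gamma}\|\beta_i\|$ for all $i$ already triggers the main hypothesis, which is precisely the simpler condition $\min_i\{\|\Delta_i\| - 4/\sqrt\gamma\cdot\|\beta_i\|\}\geq 0$.
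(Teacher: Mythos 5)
Your proof is correct, and it rests on exactly the same two identities as the paper's: $\vw_i^* = \beta_i + \eta_i$ and, for every $i$, $\vw^* = (\beta_i + \eta) - \Delta_i$, obtained by the same bookkeeping with $\sum_j \mZ^{-1}\mZ_j = \mI$. Where you diverge is the finishing step. The paper expands $\gamma\|\vw^*\|^2 - \|\vw_i^*\|^2$ into three terms, lower-bounds the cross term by Cauchy--Schwarz, and factors the resulting quadratic in $\|\Delta_i\|$ to read off its positive root, which is precisely the threshold in the hypothesis. You instead apply the reverse triangle inequality to the identity, getting $\|\vw^*\| \geq \|\Delta_i\| - \|\beta_i+\eta\| \geq \tfrac{1}{\sqrt\gamma}\|\beta_i+\eta_i\| = \tfrac{1}{\sqrt\gamma}\|\vw_i^*\|$ and squaring. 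The two arguments are equivalent in content --- the paper's factored lower bound $\gamma\left(\|\Delta_i\| - \|\beta_i+\eta\|\right)^2 - \|\beta_i+\eta_i\|^2 \geq 0$ is exactly the square of your inequality chain --- but yours is shorter, avoids the term-by-term expansion entirely, and makes it transparent why the threshold has the form $\|\beta_i+\eta\| + \tfrac{1}{\sqrt\gamma}\|\beta_i+\eta_i\|$. Your corollary argument matches the paper's. The only nitpick: you should note explicitly that $\mZ = \sum_i \mZ_i$ is invertible because each $\mZ_i$ is assumed invertible (hence positive definite), which the paper states up front and which your use of $\mZ^{-1}$ silently requires.
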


\begin{proof}[Proof of \cref{thm:fixed_routing_new}]
As a first comment, since we assume that all the $\mZ_i$'s are invertible, then so is $\mZ = \sum_{i=1}^E \mZ_i$.
We readily have for $i \in [E]$
$$
\vw_i^* = \mZ_i^{-1} \mX_i^\top \vy_i = \beta_i + \mZ_i^{-1} \mX_i^\top \vr_i = \beta_i + \eta_i.
$$
Moreover, for any $i \in [E]$, we have 
\begin{align*}
\vw^* & = \mZ^{-1} \mX^\top \vy \\
& = \mZ^{-1} \sum_{j=1}^E \mX_j^\top \vy_j \\
& = \mZ^{-1} \sum_{j=1}^E (\mZ_j \beta_j + \mX_j^\top \vr_j) \\
& =   \left\{\mZ^{-1} \sum_{j=1}^E \mZ_j (\beta_i - \mZ_j^{-1} \mZ \Delta_{ij}) \right\} + \eta \\
& =   \mZ^{-1} \left\{\sum_{j=1}^E \mZ_j\right\} \beta_i - \sum_{j=1}^E \Delta_{ij} +\eta \\
& =   \beta_i -  \Delta_{i} + \eta \\
& =   \vw_i^* -  \Delta_{i} + \eta -\eta_i.
\end{align*}
To summarize, for any $i \in [E]$, it therefore holds 
$$
\| \vw^* - \vw_i^* \| = \|\eta -\eta_i - \Delta_i\|.
$$
Let us consider $\gamma \in (0, 1]$. We can observe that
$$
\gamma \| \vw^*\|^2 - \|\vw_i^* \|^2 =  \gamma \| \vw^* - \vw_i^* \|^2 - (1+\gamma)\|\vw_i^*\|^2 + 2 \gamma (\vw^*)^\top \vw_i^*.
$$
We develop each term individually:
\[
\gamma \| \vw^* - \vw_i^* \|^2 = \gamma \|\eta -\eta_i\|^2 + \gamma \|\Delta_i\|^2 - 2\gamma \Delta_i^\top (\eta -\eta_i)  \tag{\textbf{A}}
\]
with
\[
-(1+\gamma) \| \vw_i^* \|^2 = -(1+\gamma) \|\beta_i +  \eta_i\|^2  \tag{\textbf{B}}
\]
and using $\vw^* = \beta_i -  \Delta_{i} + \eta = \beta_i + \eta_i -  \Delta_{i} + \eta - \eta_i$, we get
\[
2 \gamma (\vw^*)^\top \vw_i^* = 2\gamma \|\beta_i +  \eta_i\|^2 - 2\gamma \Delta_i^\top(\beta_i +  \eta_i) + 2\gamma (\beta_i +  \eta_i)^\top(\eta - \eta_i).  \tag{\textbf{C}}
\]
Gathering \textbf{(A)}+\textbf{(B)}+\textbf{(C)}, we obtain the expression
\begin{align*}
\gamma \| \vw^*\|^2 - \|\vw_i^* \|^2 & = \gamma \| \vw^* - \vw_i^* \|^2 - (1+\gamma)\|\vw_i^*\|^2 + 2 \gamma (\vw^*)^\top \vw_i^* \\
&= \gamma \|\Delta_i \|^2 - 2 \gamma \Delta_i^\top (\beta_i+\eta)  + \gamma \|\beta_i + \eta\|^2 - \|\beta_i + \eta_i\|^2 = \phi(\Delta_i).
\end{align*}
We want to understand the conditions on $\Delta_i$ to guarantee that
$$
\gamma \| \vw^*\|^2 - \|\vw_i^* \|^2 = \phi(\Delta_i) \geq 0.
$$
To find a sufficient condition, we lower bound the quadratic form $\phi(\Delta_i)$ as follows
\begin{align*}
\text{For any}\ \Delta_i \in \R^D,\
\phi(\Delta_i) &\geq  \gamma \|\Delta_i \|^2 - 2 \gamma \|\Delta_i\| \|\beta_i+\eta\|  + \gamma \|\beta_i + \eta\|^2 - \|\beta_i + \eta_i\|^2 \\
&= \gamma \left(\|\Delta_i\| - \text{root}_+ \right) 
\cdot \left(\|\Delta_i\| - \text{root}_{-} \right)
\end{align*}
with 
$$
\text{root}_+ = \|\beta_i+\eta\| + \frac{1}{\sqrt{\gamma}} \|\beta_i+\eta_i \|
\quad\text{and}\quad
\text{root}_{-} = \|\beta_i+\eta\| - \frac{1}{\sqrt{\gamma}} \|\beta_i+\eta_i \|.
$$
The lower bound is a standard polynomial of degree 2 and it is non-negative beyond its positive root
$$
\|\Delta_i\| \geq \|\beta_i+\eta\| + \frac{1}{\sqrt{\gamma}} \|\beta_i+\eta_i \|.
$$
We apply the exact same rationale for all $i \in [E]$ and take the minimum of the conditions to make them hold simultaneously.

Since $\gamma \in (0, 1]$ and $1/\sqrt{\gamma} \geq 1$, if we further assume that the residual terms are sufficiently small
\begin{itemize}
    \item $\|\eta\| \leq \min_{i \in [E]} \| \beta_i\|$ 
    \item for all $i \in [E]$, $\|\eta_i\| \leq \|\beta_i\|$
\end{itemize}
then we get the simpler sufficient condition after using the triangle inequality
$$
\|\Delta_i\| \geq
\frac{4}{\sqrt{\gamma}} \|\beta_i\|
\geq 
\|\beta_i+\eta\| + \frac{1}{\sqrt{\gamma}} \|\beta_i+\eta_i \|.
$$
\end{proof}

\section{Auxiliary losses for MoEs}
\label{sec:auxiliary_losses}
In this section we present the Auxiliary losses used for training MoEs and attack objectives used for our experiments in \cref{sec:experiments}. Auxiliary losses are used to ensure the data is routed in a balanced way across different experts. In particular we use the losses from \citet{vmoe2021} which are defined below.

Recall that the router weights for a given token $\vx$ and a given expert $j$, with routing parameters $\{\vs_j\}_{j=1}^E$, were defined in \cref{eq:learned_routing} as:
\begin{equation*}
p_j(\vx) = \frac{\exp(\ip{\vs_j}{\vx})}{\sum_{j'=1}^E \exp(\ip{\vs_j'}{\vx})}
\end{equation*}

In practice, we use a noisy version of this router:
\begin{equation*}
p_j(\vx) = 
\frac{\exp(\ip{\vs_j}{\vx} + \epsilon_j)}{\sum_{j'=1}^E \exp(\ip{\vs_j'}{\vx} + \epsilon_{j'})}
\end{equation*}
with $\epsilon_j \sim \mathcal{N}(\mu = 0, \sigma = \frac{1}{E})$. The quantity $\ip{\vs_j}{\vx}$ is the routing \emph{logit} for a given token and expert $j$.

\textbf{Importance Loss.}
The importance of expert $j$ for a batch of tokens $\mX$ is simply defined as the sum of the router weights assigned to that expert over the tokens in the batch:
\begin{equation}
\text{Imp}_j(\mX) = \sum_i p_j(\vx_i)
\end{equation}
where $\mX = [\vx_1, \ldots, \vx_n]^\top$ and $p_j(\vx_i)$ is the routing weight assigned to the $j$-th expert for the $i$-th token.

We use the squared coefficient of variation of the importance distribution over experts, $\text{Imp}(\mathbf{X}) := \{ \text{Imp}_j(\mathbf{X}) \}_{j=1}^E$:
\begin{equation}\label{eq:importance_loss}
    \mathcal{L}_{\text{Imp}}(\mX) = \left( \frac{\text{std}(\text{Imp}(\mX))}{\text{mean}(\text{Imp}(\mX))} \right)^2
\end{equation}

\textbf{Load Loss.}
For a given token, the load loss is based on the probability that the noisy logit of expert $j$ is among the top-$k$ logits when a new Gaussian sample is drawn. 
Let's define the $k$-th maximum noisy logit for a given token $\vx$ as:
\begin{equation}
\tau_k(\vx) = \text{$k$--max}_j \ip{\vs_j}{\vx} + \epsilon_j
\end{equation}

Then, the probability that the noisy logits of expert $j$ are above the threshold if the noise is
resampled, is given by the expression:
\begin{equation}
\pi_j(\vx) = \mathbb{P}(\ip{\vs_j}{\vx} + \epsilon \geq \tau_k(\vx))
\end{equation}

Analogous to the importance, the load of expert $j$ is defined as the sum over the tokens in a given batch:
\begin{equation}
\text{Load}_j(\mX) = \sum_i \pi_j(\vx_i)    
\end{equation}

And the load loss corresponds to the squared coefficient of variation of the load distribution,
with $\text{Load}(\mX) := \{ \text{Load}_j(\mathbf{X}) \}_{j=1}^E$:
\begin{equation}\label{eq:load_loss}
    \mathcal{L}_{\text{Load}}(\mX) = \left( \frac{\text{std}(\text{Load}(\mX))}{\text{mean}(\text{Load}(\mX))} \right)^2
\end{equation}

\textbf{Final Loss.}
The final loss used for training the models includes the classification loss (i.e. cross-entropy) and both auxiliary losses with weights 0.005:
\begin{equation}
\mathcal{L} = 
\mathcal{L}_{\text{classification}} +
0.005~\mathcal{L}_{\text{Imp}} +
0.005~\mathcal{L}_{\text{Load}}
\end{equation}

\section{Experiment Metrics}
\label{sec:experiment_metrics}
\paragraph{Precision at 1 and False discovery rate}
In multi-label datasets (such as JFT), the traditional accuracy metric used for single-label classification is ill-defined. Usually, \emph{Precision at $k$}, \emph{Recall at $k$}, and other more general metrics are used in this setting. In particular, we use the \emph{Precision at 1}. 

Let $\mY = [\vy_1, \ldots, \vy_N]^\top, \vy_i \in \mathbb{R}^C$ be the logits output by the model for a set of $N$ examples, with true labels
$\hat{\mY} = [\hat{\vy}_1, \ldots, \hat{\vy}_N]^\top, \hat{\vy}_i \in \{0, 1\}^C$.
\emph{Precision at 1} is defined as:
\begin{equation}
\text{P@1} = \frac{1}{N} \sum_{i=1}^N \delta[\hat{\vy}_{i,\argmax_c \vy_{i,c}} = 1]
\end{equation}
where $\argmax_c \vy_{i,c}$ is essentially the index of the class with the highest logit, for a given example $i$, and $\delta$ is the Kronecker delta function. That is, the non-zero elements of the sum are those examples for which the highest logit corresponds to one of the true labels of such example.
The false discovery rate is defined as $1 - P@1$.

Observe than when the examples have a single label, precision at 1 is equivalent to the accuracy, and the false discovery rate is equivalent to the error rate.

\paragraph{Rate of routing changes}
Given two sets $A$ and $B$, the intersection over union (IoU) is defined as:
\begin{equation}
\text{IoU} = \frac{\text{card}(A \cap B)}{\text{card}(A \cup B)}
\end{equation}
If the two sets are equal, $\text{IoU}$ is 1. If the two sets do not share any element, $\text{IoU} is 0$.

We use this metric in \cref{subsec:adv_select_experts} to compare how much the set of selected experts \emph{change} when adversarial attacks are peformed on V-MoE models. Thus, the \emph{rate of routing changes} is defined as $1 - \text{IoU}$.

\end{document}